\documentclass[11pt]{article}

\usepackage[final]{acl}

\usepackage{times}
\usepackage{latexsym}

\usepackage[T1]{fontenc}

\usepackage[utf8]{inputenc}

\usepackage{microtype}

\usepackage{inconsolata}

\usepackage{graphicx}
\usepackage{booktabs}   
\usepackage{tabularx}   
\usepackage{enumitem}
\usepackage{amssymb}
\usepackage{multirow}
\usepackage{soul}
\usepackage{amsthm}
\newtheorem{lemma}{Lemma}
\usepackage[most]{tcolorbox}

\title{LoopGuard: Breaking Self-Reinforcing Attention Loops \\via Dynamic KV Cache Intervention}

\author{
 \textbf{Dongjie Xu\textsuperscript{1}},
 \textbf{Hao Wu\textsuperscript{2}},
 \textbf{Weijie Shi\textsuperscript{3}},
 \textbf{Yue Cui\textsuperscript{4}},
 \textbf{Yuanjun Liu\textsuperscript{1}}, \\
 \textbf{Jiawei Li\textsuperscript{3}},
 \textbf{Haolun Ma\textsuperscript{3}},
 \textbf{An Liu\textsuperscript{1}},
 \textbf{Jia Zhu\textsuperscript{5}},
 \textbf{Jiajie Xu\textsuperscript{1}}
 \\
 \\
 \textsuperscript{1}Soochow University,\\
 \textsuperscript{2}Tongji University,\\
 \textsuperscript{3}The Hong Kong University of Science and Technology,\\
 \textsuperscript{4}Alibaba Group,\\
 \textsuperscript{5}Zhejiang Normal University
}

\begin{document}
\maketitle
\begin{abstract}

Through systematic experiments on long-context generation, we observe a damaging failure mode in which decoding can collapse into persistent repetition loops. We find that this degeneration is driven by collapsed attention patterns, where a subset of heads locks onto a narrow suffix of the history, and is further stabilized by inference-time KV cache reuse. Crucially, since many existing KV cache policies rely on attention-based importance, this collapse can produce spuriously high scores for repetitive tokens, causing cache management to inadvertently amplify repetition. To study this phenomenon in a controlled and reproducible manner, we introduce LoopBench, a benchmark with explicit loop-inducing conditions and loop-oriented metrics that quantify repetition severity and generation instability beyond downstream task scores. Building on these insights, we propose LoopGuard, a lightweight, plug-in KV cache guard that detects loop onset online and disrupts the feedback cycle by pruning repetitive tail spans under a fixed cache budget. Experiments on LoopBench show that LoopGuard reduces loop incidence by over 90 percentage points, while restoring output diversity and reducing token waste. 
\end{abstract}

\begin{figure*}[t]
    \centering
    \includegraphics[width=\linewidth]{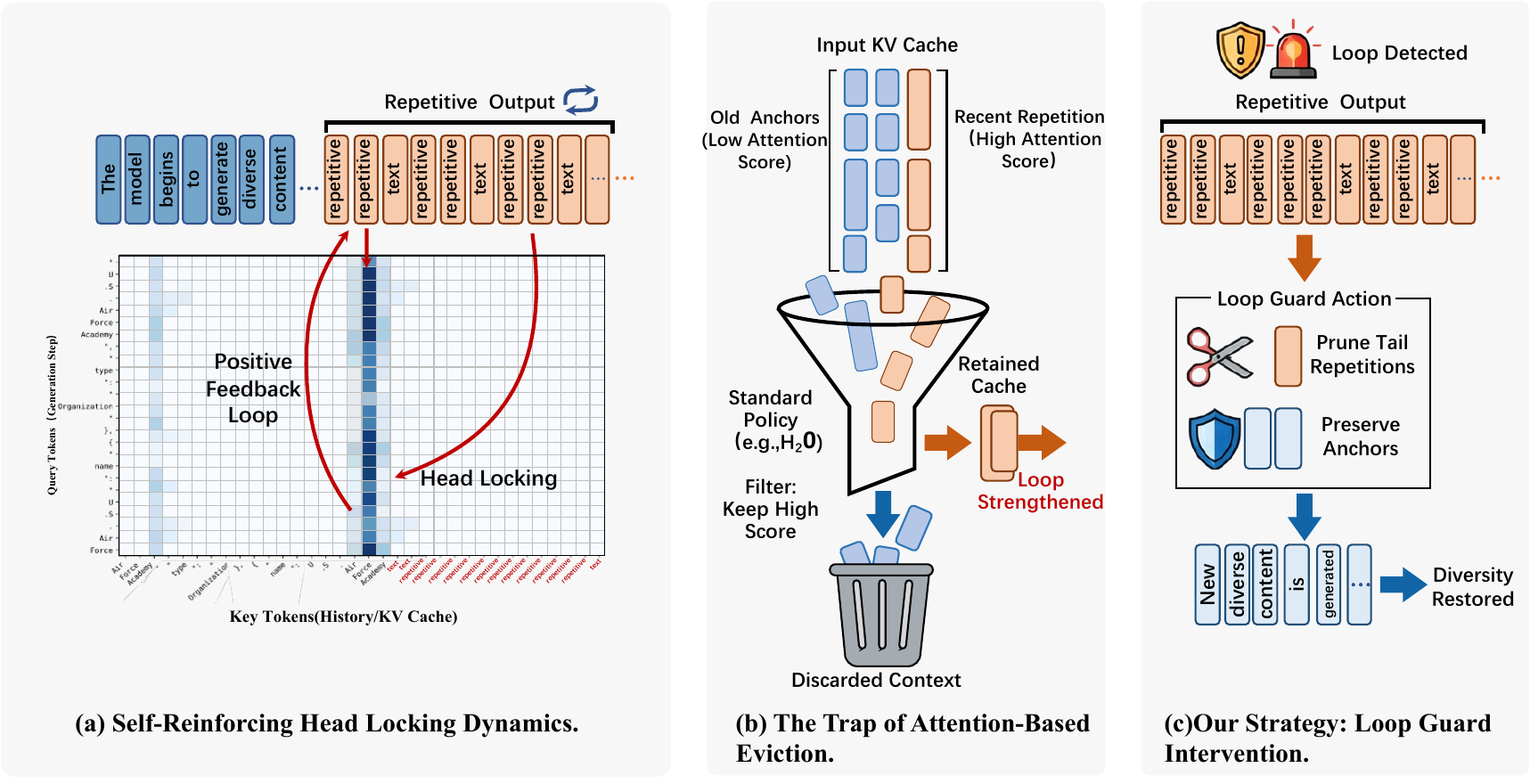}
    \caption{Illustration of self-reinforcing repetition loops and our mitigation strategy. (a) Attention heads lock onto narrow repetitive spans, creating a positive feedback loop. (b) Existing attention-based eviction policies (e.g., H2O) inadvertently strengthen this loop by preserving high-score repetitive tokens while discarding diverse context. (c) LoopGuard detects loop onset and prunes the repetitive tail to break the feedback cycle and restore diversity.}
    \label{fig:motivation}
\end{figure*}

\section{Introduction}
Large language models (LLMs) are increasingly deployed in long-context generation settings, powering workflows such as long-form summarization, structured information extraction, and tool-augmented agents~\cite{bai2024longbench}. In these applications, reliability is as critical as raw capability: failures can propagate to downstream systems and incur substantial computational cost~\cite{yao2022react}. Yet, despite advances in model scaling and long-context training, LLMs remain vulnerable to generation degeneration that can abruptly derail outputs, waste tokens, and inflate latency~\cite{holtzman2019curious}. Understanding and mitigating such reliability failures is therefore essential for dependable long-context generation.

In this work, we study persistent repetition loops, a damaging failure mode in which long-context decoding collapses into self-sustaining repetition and fails to recover contextual diversity~\cite{li2023repetition}. At inference time, long-context generation crucially relies on the key–value (KV) cache, which stores past keys and values to enable efficient attention over extended histories. This mechanism underpins many practical applications that require prolonged decoding, such as long-form summarization, structured generation, and agentic workflows. To operate under bounded memory, a broad range of KV cache management strategies have been proposed, typically retaining or evicting past states based on attention-derived relevance or simple retention rules~\cite{zhang2023h2o,li2024snapkv,DBLP:conf/iclr/XiaoTCHL24}.

Our analysis reveals that repetition loops are driven by a collapse in attention dynamics: a subset of attention heads repeatedly locks onto a narrow suffix of the generated history, forming highly stable and concentrated attention patterns that trap generation in a low diversity attractor (Figure~\ref{fig:motivation}a)~\cite{liu2024lost,hiraoka2025repetition}. Once such attention collapse occurs, inference-time KV cache reuse preserves and reinforces the same trajectory across subsequent decoding steps. Moreover, because many KV cache management strategies rely on attention-based importance signals, the collapsed attention can become self-reinforcing, repeatedly preserving the same repetitive states and reducing cache diversity, which further amplifies repetition loops (Figure~\ref{fig:motivation}b). We observe this locking and amplification effect to be more pronounced in smaller LLMs, indicating a vulnerability that depends on model scale, making loops easier to trigger and harder to escape. Compounding this issue, existing evaluations typically prioritize downstream task scores under healthy contexts, leaving the regime of degeneration insufficiently controlled and hard to reproduce. This methodological gap makes it difficult to rigorously compare cache policies when reliability fails, motivating the need for a dedicated benchmark and loop-oriented metrics for systematic analysis.

To overcome the lack of reproducible evaluation and the vulnerability of existing cache policies, we introduce LoopBench, a benchmark designed to isolate loop dynamics under controlled conditions. It incorporates explicit triggers covering both schema-constrained generation and recursive instruction following. Complementing this, we introduce loop-oriented metrics to quantify repetition severity and generation instability beyond standard accuracy scores. Building on these findings, we propose LoopGuard, a lightweight KV cache intervention that monitors online degeneration signals. Upon detecting loop onset, LoopGuard breaks the self-reinforcing feedback cycle by pruning repetitive tail spans with a debounced trigger, while preserving anchor tokens and sparsely retaining distant context (Figure~\ref{fig:motivation}c). Experiments demonstrate that LoopGuard acts as a minimally invasive safety net, reducing loop incidence by over 90 percentage points while restoring output diversity and reducing token waste under a fixed cache budget.

\noindent Our contributions are summarized as follows:
\begin{itemize}[leftmargin=0.5em, label=\textbullet, itemsep=0.1ex, topsep=0.1ex]
  \item We identify persistent repetition loops as an attention-level failure mode with head locking, and show that KV cache feedback and attention-guided heuristics can jointly amplify repetition by reducing cache diversity.
  \item We construct LoopBench, a controlled and reproducible benchmark with loop-inducing conditions, and introduce loop-specific metrics that quantify repetition severity and generation instability.
  \item We propose LoopGuard, a lightweight KV cache intervention that detects loop onset online and mitigates repetition by disrupting self-reinforcing cache feedback through targeted pruning of repetitive tail spans under a fixed cache budget.

\end{itemize}

\section{Related Work}
\subsection{Repetitive Generation Mechanisms}
Repetitive generation refers to a degeneration phenomenon in which language models become trapped in producing identical or highly similar token sequences over extended decoding steps~\cite{olsson2022context}. It has been observed across models of varying scales and can severely compromise generation quality and reliability~\cite{xu2022learning, press2021train}. Beyond being a surface-level decoding issue, recent studies suggest that repetition loops reflect a more persistent failure in inference-time dynamics: once the model enters a low-entropy repetitive regime, the generation trajectory can become self-reinforcing and difficult to escape, leading to persistent loops~\cite{li2025loopllm}. Repetition has also been investigated as an adversarially inducible failure mode, where non-halting prompts or automated prompt optimization reliably drive models into sustained repetition under controlled settings~\cite{hammouri2025non}. While this line of work establishes the reproducibility and stability of repetition loops, it primarily focuses on how to elicit such behaviors, leaving the underlying mechanisms of loop persistence during inference less understood.

\subsection{Efficient Long-context Inference with KV Cache}
As context length increases, managing the KV cache during inference becomes a central challenge for long-context generation. A prominent line of work focuses on selectively retaining or evicting KV entries under a fixed memory budget. Methods such as H$_2$O~\cite{zhang2023h2o} and SnapKV~\cite{li2024snapkv} estimate token importance based on historical attention and retain only a subset of tokens for subsequent attention computation. Recent works such as StreamingLLM~\cite{DBLP:conf/iclr/XiaoTCHL24} and LM-Infinite~\cite{han2024lm} retain a small set of initial tokens in addition to a sliding context window, helping preserve model performance on long sequences. However, these methods rely on predefined retention rules and cannot adapt to evolving inference dynamics. To reduce irreversible information loss, hierarchical or retrieval-based KV cache management has been proposed: InfLLM~\cite{xiao2024infllm} maintains block-level memory units with on-demand retrieval (optionally offloading cold blocks to CPU), and QUEST~\cite{DBLP:conf/icml/TangZZXKH24} introduces query-aware sparsity at page granularity. Overall, many KV cache strategies rely on attention-derived relevance signals; under repetition loops, these signals can become self-reinforcing and biased toward a narrow history, amplifying loops and hindering recovery.

\section{Background and Motivation}
\subsection{KV Cache Formulation}

During inference, KV cache stores previously computed attention keys and values for reuse across decoding steps, enabling efficient long-context generation by reducing redundant computation. For each past token $x_i$, the model computes key and value vectors $(k_i^h, v_i^h)$ for each attention head $h$, and stores them in the cache so that future steps can reuse them without recomputation.

At step $t{+}1$, head $h$ forms a query $q_{t+1}^h$ and attends to a subset of cached positions. 
Let $S_t \subseteq \{0,\dots,t\}$ denote the indices of tokens whose keys and values are currently accessible under a cache policy (the full cache corresponds to $S_t=\{0,\dots,t\}$), which can be viewed as an additional restriction on top of the causal attention mask. 
The attention weights and head output are
\begin{equation}
\label{eq:attention_mechanism}
\begin{aligned}
\alpha_{t+1,i}^h
&= \mathrm{softmax}_{i \in S_t}\!\left(
\frac{\langle q_{t+1}^h, k_i^h \rangle}{\sqrt{d_h}}
\right), \\
o_{t+1}^h
&= \sum_{i \in S_t} \alpha_{t+1,i}^h \, v_i^h .
\end{aligned}
\end{equation}
The multi-head outputs are then combined to compute the next-token distribution. 
Under memory limits, cache management dynamically updates $S_t$ (typically enforcing $|S_t|\le B$ for a fixed budget $B$), and thus directly constrains which past information can influence decoding at each step.

\begin{figure}[t]
    \centering
    \includegraphics[width=\linewidth]{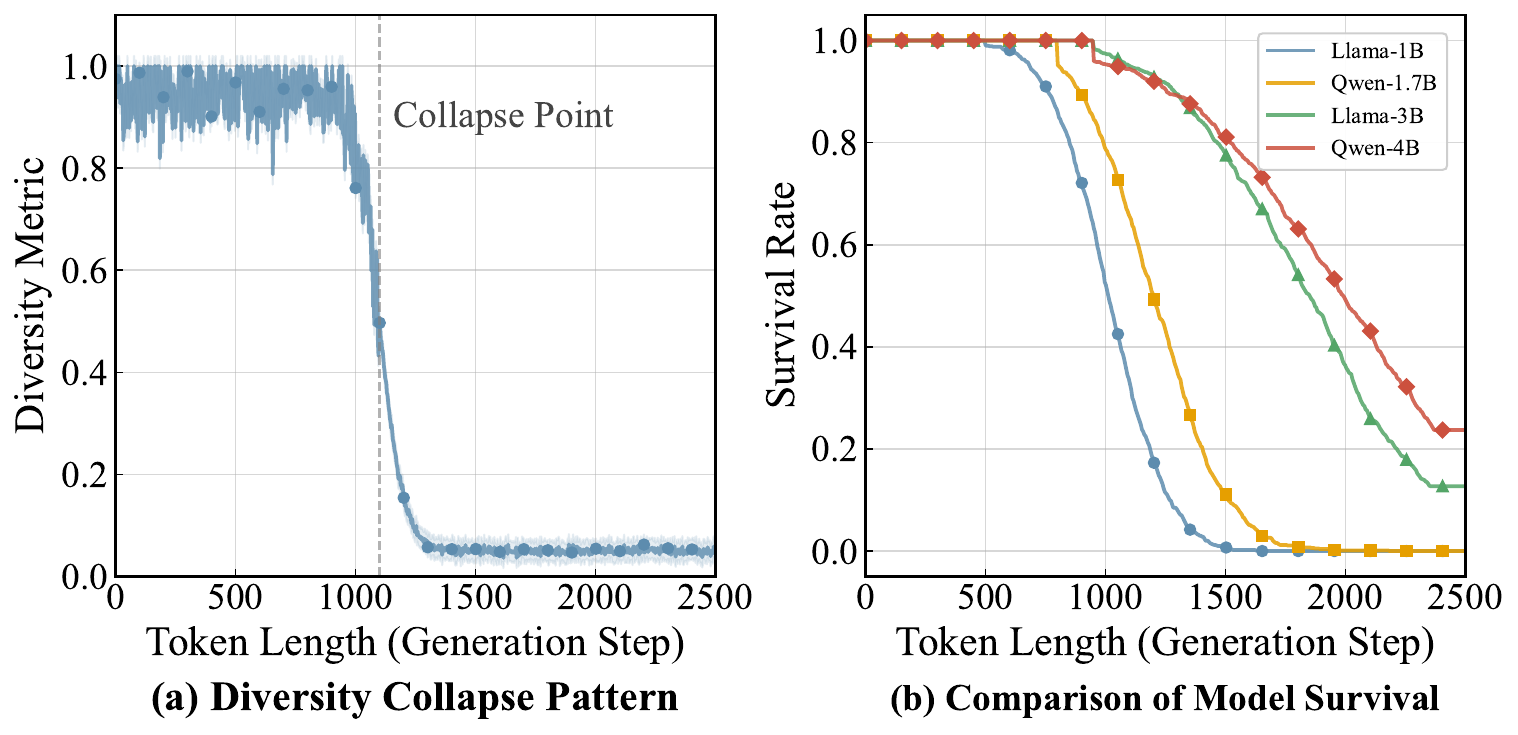}
    \caption{Diversity Collapse and Scale-dependent Survival under Repetition Loops. (a) Diversity metric over decoding steps with a clear collapse point. (b) Survival rate (non-collapsed runs) across model scales under loop-inducing prompts.}
    \label{fig:collapse_survival}
\end{figure}

\subsection{Analysis of Repetitive Generation}
In this paper, we focus on \emph{persistent repetition loops} in \emph{long-context} decoding, where generation collapses into self-sustaining repetition and fails to recover diversity. 
This regime is practically important because long-horizon decoding magnifies failure cost (token waste and latency) and is where KV cache reuse and budgeted cache policies can induce strong feedback effects.

Figure~\ref{fig:collapse_survival} summarizes two key observations. 
In Figure~\ref{fig:collapse_survival}(a), our diversity metric exhibits a two-stage trajectory: it remains high early on, then drops sharply at a collapse point and stays persistently low, after which decoding is trapped in a stable repetitive regime. 
This behavior is consistent with a self-reinforcing autoregressive process in which predictions concentrate once a short span starts to repeat~\cite{li2025loopllm}. 
In Figure~\ref{fig:collapse_survival}(b), we observe a clear scale effect: defining survival at step $t$ as the fraction of runs that have not collapsed by $t$, larger models survive longer while smaller models fail earlier and more frequently. 
These observations motivate online monitoring and targeted disruption once looping begins.

\subsection{Repetition Collapse under RoPE and KV Cache}

We study why repetition loops, once formed, can remain stable under rotary position embeddings (RoPE)~\cite{su2024roformer} with KV cache reuse, i.e., autoregressive decoding where RoPE is applied to the query and key vectors, while past keys and values are cached and reused across steps.

Under RoPE, attention scores admit a block-wise relative-offset form
\begin{equation}
s_{t,i}
\;=\;
\sum_{j=1}^{d_h/2}
(q_t^{(j)})^\top R\!\big(\omega_j (t-i)\big)\, k_i^{(j)},
\label{eq:rope_block}
\end{equation}
where $R(\cdot)$ is a $2{\times}2$ rotation and $\omega_j$ are fixed frequencies.
Thus, scoring depends on indices primarily through the relative offset $(t-i)$.

After collapse, decoding often enters a short-period tail repetition regime with period $P$, where projections over the active suffix become approximately periodic: $q_t \approx q_{t-P}$ and $k_i \approx k_{i-P}$.
Substituting into Eq.~\eqref{eq:rope_block} gives
\begin{equation}
\begin{aligned}
s_{t,i} - s_{t-P,i-P}
\;\approx\;
\sum_{j}
(q_{t-P}^{(j)})^\top
R\!\big(\omega_j (t-i)\big)
k_{i-P}^{(j)}
\\
-\;
\sum_{j}
(q_{t-P}^{(j)})^\top
R\!\big(\omega_j (t-P-(i-P))\big)
k_{i-P}^{(j)}
\;=\; 0.
\end{aligned}
\label{eq:rope_equivariance}
\end{equation}
indicating approximate invariance of scores along the repetition trajectory.

With KV reuse, repeated suffix states are appended while earlier entries remain fixed.
Once attention concentrates on a short recent span, the head output is dominated by a tail operator
\begin{equation}
\mathcal{T}_{\mathrm{tail}}:
\{v_i\}_{i\in\mathcal{B}_t}
\;\mapsto\;
\sum_{i\in\mathcal{B}_t} \alpha_{t,i}\, v_i,
\label{eq:tail_operator}
\end{equation}
which becomes nearly constant across steps under Eq.~\eqref{eq:rope_equivariance}.
Repeatedly appending near-duplicate tail states therefore yields a self-consistent contraction on the tail subspace, stabilizing a low-diversity attractor.
Further details (including error bounds and a stability sketch) are deferred to Appendix~\ref{sec:app_rope_kv}.

\section{Benchmark and Metrics}

\subsection{Loop-inducing Benchmarks}
Our goal is to evaluate \emph{persistent repetition loops} under controlled and reproducible conditions, rather than downstream task performance~\cite{du2025context}. 
To this end, we construct loop-inducing prompts with explicit output constraints and long-horizon decoding, which expose self-reinforcing dynamics while minimizing confounders such as semantic ambiguity. 
Construction details and representative outputs are provided in
Appendix~\ref{sec:app_loopbench_data} and Appendix~\ref{sec:loop_example}.

\textbf{LoopBench-DC} tests schema-constrained structured generation under long, noisy contexts. 
We repurpose the MultiNews subset of LongBench~\cite{bai2024longbench} by replacing summarization with strict JSON-based entity extraction and normalization, where loops appear as duplicated fields or repeated JSON fragments after partial schema satisfaction.

\textbf{LoopBench-RI} targets recursive instruction following over long narratives. 
We repurpose the NarrativeQA subset of LongBench~\cite{bai2024longbench} by replacing QA with an explicit self-referential recursion protocol (draft $\rightarrow$ self-correct $\rightarrow$ expand $\rightarrow$ repeat), seeded with a short logic-stream history; loops manifest as template-like repetition in self-corrections and expansions after diversity collapse.

Despite different surface formats, both benchmarks share the same failure pattern: abrupt diversity collapse followed by a persistent repetitive regime driven by KV cache feedback.

\subsection{Evaluation Protocol}
We evaluate repetition loops under a fixed decoding protocol to isolate cache-coupled dynamics and ensure reproducibility. Unless otherwise specified, we use greedy (deterministic) decoding and cap generation at a maximum of $T_{\max}=2500$ tokens; all models and KV cache strategies use identical prompts and stopping criteria. Each prompt is run three times, yielding identical outputs under deterministic decoding. 

\begin{figure*}[t]
    \centering
    \includegraphics[width=\linewidth]{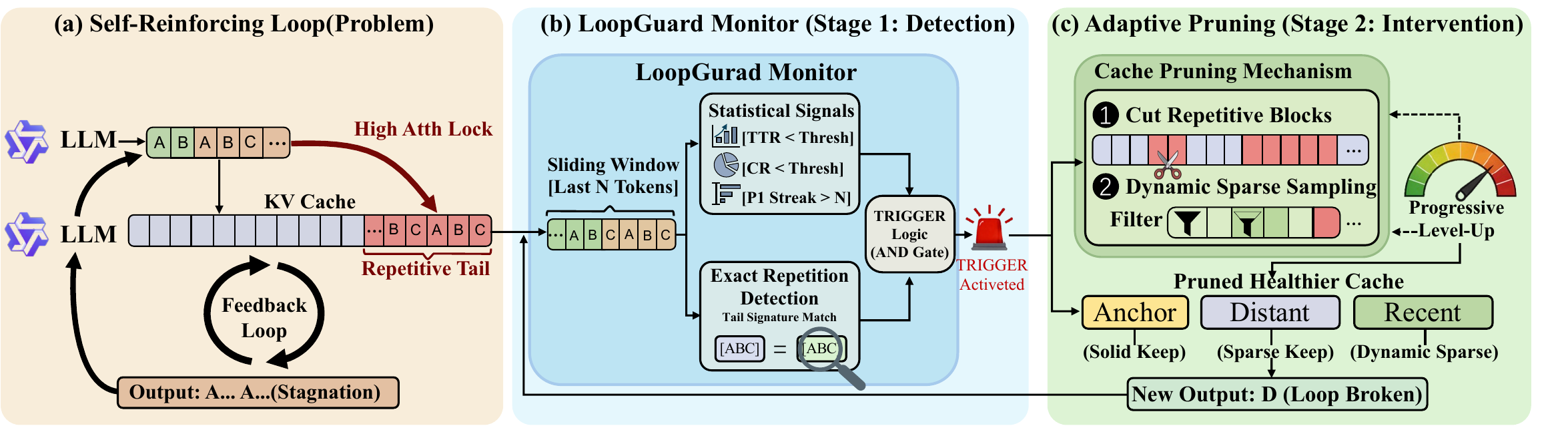}
    \caption{Overview of LoopGuard. (a) Repetition loops driven by attention locking on a narrow tail and stabilized by KV-cache feedback. (b) Online detection via debounced statistical and tail-repetition signals. (c) Adaptive KV cache pruning to break the loop and restore diversity.}
    \label{fig:pipeline}
\end{figure*}

\subsection{Metrics and Collapse Definition}
We detect repetition loops using task-agnostic signals computed from the generated output, capturing diversity collapse, global redundancy, and persistence~\cite{tevet2021evaluating}.

\textbf{Type--Token Ratio (TTR).}
TTR measures lexical diversity as the fraction of unique tokens in the generated sequence~\cite{kettunen2014can}:
\begin{equation}
\mathrm{TTR} = \frac{|\mathrm{uniq}(y_{1:T})|}{T},
\end{equation}
where $y_{1:T}$ denotes the generated token sequence with length $T$.

\textbf{Compression Ratio (CR).}
CR measures global repetition via lossless compressibility~\cite{shaib2024standardizing}:
\begin{equation}
\mathrm{CR} = \frac{|\mathrm{comp}(y_{1:T})|}{|y_{1:T}|},
\end{equation}
where $\mathrm{comp}(\cdot)$ is a standard lossless compressor; lower CR indicates stronger repetition.

\textbf{Generation length.}
We report the number of generated tokens $T$ per run as an auxiliary cost measure. 
In our setting, persistence is reflected by near-budget generation (i.e., failing to terminate early under a fixed $T_{\max}$).

\textbf{Loop detection rule.}
A run is flagged as a repetition loop if it exhibits both abnormally low diversity and high global redundancy, and persists close to the maximum token budget:
\begin{equation}
\label{eq:loop_rule}
\begin{aligned}
\mathrm{Loop}=1 \iff\;&
\mathrm{TTR} \le \theta_{\mathrm{TTR}}
\ \land\ 
\mathrm{CR} \le \theta_{\mathrm{CR}}
\\
&\land\ 
T \ge \theta_{\mathrm{len}} .
\end{aligned}
\end{equation}
where $\theta_{\mathrm{len}}=2480$ (with $T_{\max}=2500$), and $\theta_{\mathrm{TTR}}$ and $\theta_{\mathrm{CR}}$ are fixed thresholds shared across methods, with details reported in Appendix~\ref{sec:appendix_settings}.

\section{The LoopGuard Method}
\subsection{Overview of LoopGuard}

As shown in Figure~\ref{fig:pipeline}, LoopGuard is a lightweight inference-time guard that mitigates self-reinforcing repetition loops via event-driven KV cache interventions. 
Figure~\ref{fig:pipeline}a illustrates the collapse: attention locks onto a repetitive tail and KV cache reuse stabilizes this trajectory, leading to sustained stagnation. 
LoopGuard operates in two stages. 
In Stage~1 (Figure~\ref{fig:pipeline}b), an online monitor tracks recent outputs with a sliding window and triggers only when loop patterns persist, using a debounced decision rule. 
In Stage~2 (Figure~\ref{fig:pipeline}c), LoopGuard prunes the KV cache by removing the repetitive tail and reassembling the cache from anchor tokens, sparsely retained long-range context, and a budgeted recent subset to break the feedback cycle and restore diversity.

\subsection{Online Detection with Debounced Trigger}

LoopGuard models loop onset as an \emph{online change-point} in decoding dynamics, reflected by a rapid shift toward a low-diversity attractor and persistent self-reinforcement.
We therefore build a selective monitor that fuses complementary evidence of degeneration and triggers only under consistent, persistent signals.

Let $\mathcal{W}_t=\{\max(1,t-W+1),\ldots,t\}$ be a sliding window over the partial output $y_{1:t}$.
We compute windowed collapse indicators for diversity (TTR), redundancy (CR), and confidence collapse based on the top-1 probability
$p_{\max}(t)=\max_v \Pr(y_{t+1}=v\mid y_{1:t})$.
We aggregate them via a robust $K$-of-$3$ vote:
\begin{equation}
\label{eq:warn_vote}
\begin{aligned}
\mathrm{Warn}(t)
=\;&
\mathbf{1}[m_{\mathrm{ttr}}(t)<\theta_{\mathrm{ttr}}]
+\mathbf{1}[m_{\mathrm{cr}}(t)<\theta_{\mathrm{cr}}] \\
&+\mathbf{1}[m_{\mathrm{p1}}(t)\ge \theta_{\mathrm{streak}}]
\;\ge\; K .
\end{aligned}
\end{equation}
where $m_{\mathrm{p1}}(t)$ counts consecutive steps with $p_{\max}(t)>\theta_{\mathrm{conf}}$.

To avoid reacting to transient repetitions, we require persistence evidence using (i) a novelty-based stagnation gate over $\mathcal{W}_t$ and (ii) a debounced suffix self-alignment test for short-period tail repetitions; see Appendix~\ref{sec:appendix_settings} for details.
An intervention is issued only after a warmup phase and outside a cooldown window:
\begin{equation}
\label{eq:trigger_final}
\begin{split}
\mathrm{Trigger}(t)
=\;
\mathbf{1}[t\ge t_{\min}]
\;\land\;
\mathbf{1}[\mathrm{cd}(t)=0] \\
\land\;
\mathrm{Warn}(t)
\;\land\;
\big(\mathrm{Stall}(t)\lor \mathrm{TailRep}(t)\big).
\end{split}
\end{equation}
where $\mathrm{cd}(t)$ is a cooldown counter reset after each intervention.
This debounced design keeps the monitor dormant during healthy decoding, while responding promptly once the degenerative regime becomes stable.

\subsection{Adaptive KV Cache Pruning}
When $\mathrm{Trigger}(t)$ fires, LoopGuard performs an event-driven cache reset that removes the corrupted tail trajectory while retaining enough context for coherent continuation under a fixed budget $B$.
Concretely, we construct a retained index set $\mathcal{I}_{\mathrm{keep}}\subseteq\{0,\ldots,t\}$ and prune all KV entries outside it, with $|\mathcal{I}_{\mathrm{keep}}|\le B$:
\begin{equation}
\label{eq:ikeep}
\mathcal{I}_{\mathrm{keep}}
=
\mathcal{I}_{\mathrm{anchor}}
\;\cup\;
\mathcal{I}_{\mathrm{sparse}}
\;\cup\;
\mathcal{I}_{\mathrm{recent}} .
\end{equation}

\noindent \textbf{Anchors.}
We preserve a short prefix
$\mathcal{I}_{\mathrm{anchor}}=\{0,\ldots,N_{\mathrm{anchor}}-1\}$
to keep the instruction and global constraints stable after pruning.

\noindent \textbf{Sparse long-range context.}
Let $\mathcal{W}^{\mathrm{recent}}_t=\{t-R+1,\ldots,t\}$ denote the recent window.
From the remaining history $\{0,\ldots,t\}\setminus \mathcal{W}^{\mathrm{recent}}_t$, we form $\mathcal{I}_{\mathrm{sparse}}$ by uniform subsampling under a fixed budget.
This preserves coarse long-range coverage while avoiding dense contiguous spans that can reinforce local repetition.

\noindent \textbf{Tail-cleaned recent context.}
Let $\mathcal{I}_{\mathrm{bad}}(t)\subseteq \mathcal{W}^{\mathrm{recent}}_t$ denote indices belonging to detected tail-repetition blocks.
We exclude them from the recent candidates,
\begin{equation}
\mathcal{C}_{\mathrm{recent}}
=
\mathcal{W}^{\mathrm{recent}}_t
\setminus
\mathcal{I}_{\mathrm{bad}}(t),
\end{equation}
and select $\mathcal{I}_{\mathrm{recent}}$ as a budgeted subsample of $\mathcal{C}_{\mathrm{recent}}$.
This explicitly cuts the loop-inducing suffix while retaining a small amount of local context for smooth continuation.

\noindent \textbf{Progressive aggressiveness.}
LoopGuard maintains a discrete aggressiveness level $\ell$ that controls the recent budget $B_{\mathrm{recent}}(\ell)$.
If degeneration re-triggers shortly after an intervention, we increase $\ell$ (up to a cap) and shrink $B_{\mathrm{recent}}(\ell)$ accordingly, reducing reliance on the immediate past and shifting attention toward anchors and sparse long-range context.
Finally, we map $\mathcal{I}_{\mathrm{keep}}$ to cache slots and apply index selection to prune keys and values across layers, after which decoding resumes under the cooldown constraint.

\subsection{Complexity Analysis}

LoopGuard incurs negligible overhead on top of standard autoregressive decoding.
The online monitor maintains windowed statistics over the last $W$ tokens: TTR and novelty updates cost $O(W)$ per step in a straightforward implementation, while the compression-based redundancy score is evaluated every $s$ steps (a fixed interval), yielding an amortized $O(W/s)$ overhead.
Cache pruning is event-driven and executes only when $\mathrm{Trigger}(t)$ fires.
It applies an index-gather to keys and values across $L$ layers with cost
$O\!\left(L\,|\mathcal{I}_{\mathrm{keep}}|\,d_{\mathrm{kv}}\right)$,
which is linear in the retained cache length and requires no additional forward passes.
In practice, since $|\mathcal{I}_{\mathrm{keep}}|\le B$ is bounded and interventions are rare, the added cost is small compared to the base attention computation.

\section{Experiments}

\begin{table*}[t]
\centering
\small
\setlength{\tabcolsep}{6pt}
\begin{tabular}{l l l c c c c}
\toprule
Dataset & Model & Method &  CR$\uparrow$ & TTR$\uparrow$ & Avg Len$\downarrow$ & Loop Rate$\downarrow$\\
\midrule

\multirow{8}{*}{LoopBench-DC}
& \multirow{4}{*}{Qwen3-1.7B}
& Full Cache              & 0.064 & 0.086 & 2495 & 100\% \\
& & StreamingLLM           & 0.085 & 0.118 & 2496 & 100\% \\
& & H$_2$O                 & 0.052 & 0.079 & 2495 & 100\% \\
& & \textbf{LoopGuard (Ours)} & \textbf{0.219} & \textbf{0.528} & \textbf{1326} & \textbf{1.3\%} \\
\cmidrule(lr){2-7}
& \multirow{4}{*}{Llama3.2-1B}
& Full Cache              & 0.044 & 0.065 & 2498 & 100\% \\
& & StreamingLLM           & 0.058 & 0.087 & 2498 & 100\% \\
& & H$_2$O                 & 0.039 & 0.054 & 2496 & 100\% \\
& & \textbf{LoopGuard (Ours)} & \textbf{0.207} & \textbf{0.495} & \textbf{1382} & \textbf{1.7\%} \\

\midrule

\multirow{8}{*}{LoopBench-RI}
& \multirow{4}{*}{Qwen3-1.7B}
& Full Cache              & 0.105 & 0.097 & 2379 & 93.5\% \\
& & StreamingLLM           & 0.097 & 0.122 & 2418 & 97.2\% \\
& & H$_2$O                 & 0.080 & 0.095 & 2486 & 99.8\% \\
& & \textbf{LoopGuard (Ours)} & \textbf{0.261} & \textbf{0.474} & \textbf{1412} & \textbf{2.7\%} \\
\cmidrule(lr){2-7}
& \multirow{4}{*}{Llama3.2-1B}
& Full Cache              & 0.089 & 0.119 & 2479 & 94.7\% \\
& & StreamingLLM           & 0.081 & 0.097 & 2485 & 98.4\% \\
& & H$_2$O                 & 0.068 & 0.084 & 2496 & 100.0\% \\
& & \textbf{LoopGuard (Ours)} & \textbf{0.305} & \textbf{0.489} & \textbf{1457} & \textbf{2.3\%} \\

\bottomrule
\end{tabular}
\caption{Main results on LoopBench under a fixed KV budget of 1024 tokens and a shared maximum generation budget of $T_{\max}=2500$.
We report loop rate and loop-oriented metrics (CR/TTR) together with average generation length (Avg Len) as an auxiliary token-cost measure.
Best results are in \textbf{bold}.}
\label{tab:main_loopbench}
\end{table*}

\subsection{Experiment Setting}
\noindent \textbf{Datasets.} We evaluate our method on two distinct categories of tasks: (1) LoopBench, our newly constructed benchmark comprising \textit{LoopBench-DC} (Data Constraints) and \textit{LoopBench-RI} (Recursive Instructions), which are specifically designed to induce and evaluate self-reinforcing repetition loops; and (2) 2WikiMultihopQA from LongBench~\cite{bai2024longbench}, which serves as a standard long-context baseline to verify that our method preserves multi-hop reasoning capabilities under strict memory budgets.

\noindent \textbf{Models.} We conduct experiments using the Qwen~\cite{yang2025qwen3} and Llama~\cite{dubey2024llama} model families, with a specific focus on smaller scales that are typically more sensitive to cache dynamics.

\noindent \textbf{Baselines.} We compare LoopGuard against three representative KV cache strategies: (1) Full Cache (standard attention with no eviction); (2) H$_2$O~\cite{zhang2023h2o}, an attention-based heavy-hitter eviction policy; and (3) StreamingLLM~\cite{DBLP:conf/iclr/XiaoTCHL24}, a position-based sliding window policy with attention sinks.

\noindent \textbf{Implementation Details.} To ensure a fair comparison, all efficient KV cache methods are constrained to a fixed KV cache budget of 1024 tokens. For LoopGuard, we set the online monitor window $W=256$, the debounce threshold $K=2$, and reserve a fixed anchor size of 32 tokens. We utilize greedy decoding for LoopBench to strictly isolate deterministic loop dynamics, and cap generation with a maximum output length of $T_{\max}=2500$ tokens. All experiments are conducted on NVIDIA A100 GPUs.

\subsection{Main Results on LoopBench}
Table~\ref{tab:main_loopbench} reports the main results on LoopBench under a fixed KV cache budget and a shared maximum generation budget of $T_{\max}=2500$. Across both LoopBench-DC and LoopBench-RI, and for both Qwen3-1.7B and Llama3.2-1B, LoopGuard consistently and substantially reduces persistent repetition compared to all baselines. On LoopBench-DC, all baseline methods (Full Cache, StreamingLLM, and H$_2$O) collapse in all runs, reaching a loop rate of $100\%$ for both model families, whereas LoopGuard reduces the loop rate to $1.3\%$ (Qwen3-1.7B) and $1.7\%$ (Llama3.2-1B). On LoopBench-RI, baselines still fail in the vast majority of cases ($93.5\%$--$100\%$), while LoopGuard lowers loop rates to $2.3\%$--$2.7\%$, demonstrating loop suppression across distinct loop-inducing regimes.

Beyond loop frequency, LoopGuard also restores diversity and reduces token waste. Baseline outputs are highly compressible and lexically repetitive, with very low CR ($\le 0.105$) and TTR ($\le 0.122$), and they almost always exhaust the maximum token budget with average lengths near $T_{\max}$. In contrast, LoopGuard substantially increases CR to about $0.21$--$0.22$ and TTR to $0.47$--$0.53$, indicating recovery of lexical and structural diversity, while reducing average generation length to roughly $1.3$k--$1.45$k tokens; this reduction reflects natural termination after diversity is restored rather than aggressive early stopping. Among the baselines, H$_2$O consistently exhibits the most severe degeneration (lowest CR/TTR and highest loop rates), aligning with our analysis that attention-guided eviction can amplify self-reinforcing repetition once attention collapses onto a narrow history span, whereas StreamingLLM alters retention behavior but does not fundamentally prevent loop formation.

\subsection{Generalization to Long-Context QA}
\begin{table}[t]
\centering
\small
\setlength{\tabcolsep}{7pt}
\begin{tabular}{l c}
\toprule
Method & F1$\uparrow$ \\
\midrule
Full Cache & 50.49 \\
StreamingLLM & 21.05 \\
H$_2$O & 33.73 \\
\textbf{LoopGuard (Ours)} & \textbf{53.61} \\
\bottomrule
\end{tabular}
\caption{Performance on 2WikiMultiHopQA (LongBench) measured by F1 under the same KV budget setting as LoopBench.}
\label{tab:2wiki_f1}
\end{table}

Beyond loop-inducing settings, we evaluate whether LoopGuard preserves general long-context capabilities on 2WikiMultiHopQA from LongBench~\cite{bai2024longbench}. Table~\ref{tab:2wiki_f1} reports the F1 scores. LoopGuard achieves the best performance (53.61 F1), slightly improving over Full Cache (50.49), indicating that our event-driven pruning does not harm multi-hop reasoning and can even help by reducing distraction from redundant or noisy context. In contrast, StreamingLLM and H$_2$O exhibit substantial drops (21.05 and 33.73 F1, respectively), suggesting that fixed sliding-window retention or attention-guided eviction may discard globally relevant evidence required for multi-hop aggregation. Overall, these results demonstrate that LoopGuard is not only effective at suppressing persistent repetition loops, but also maintains strong task performance under strict KV budgets, supporting its plug-in reliability intervention design.

\begin{figure}[t]
    \centering
    \includegraphics[width=\linewidth]{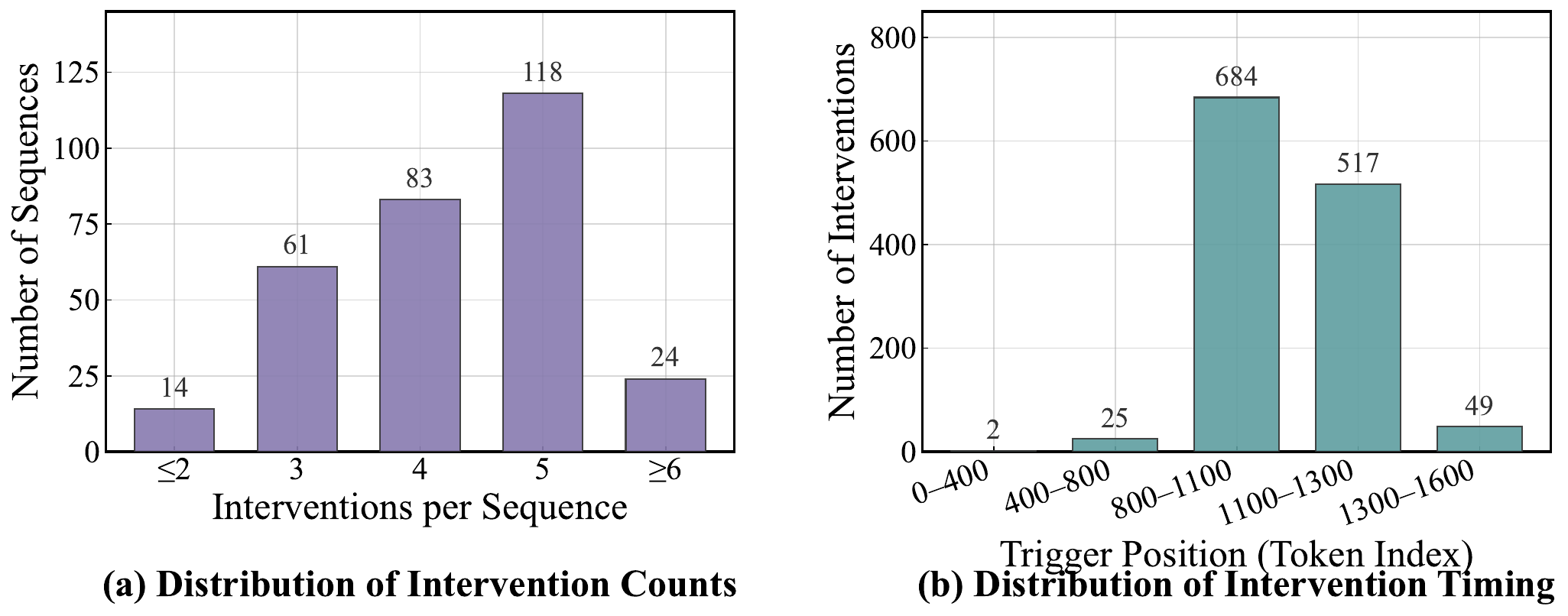}
    \caption{LoopGuard intervention behavior under loop-inducing prompts.
  (a) Distribution of the number of interventions per sequence.
  (b) Distribution of intervention trigger positions.}
  \label{fig:loopguard-intervention}
\end{figure}

\subsection{Intervention Frequency and Timing}

Figure~\ref{fig:loopguard-intervention} summarizes LoopGuard's intervention behavior. In Figure~\ref{fig:loopguard-intervention}a, interventions per sequence concentrate in the range of 3--5, with 5 being the most common case; sequences with very few ($\le$2) or many ($\ge$6) interventions are relatively rare. Figure~\ref{fig:loopguard-intervention}b shows that triggers are overwhelmingly concentrated at 800--1300 tokens (accounting for the vast majority of interventions), while early-stage triggers (0--400) are almost absent. Overall, LoopGuard is mostly dormant early and activates primarily around the typical collapse window, consistent with an event-driven reliability guard rather than an always on pruning policy.

\subsection{Ablation Study of LoopGuard}

Table~\ref{tab:ablation_loopguard} summarizes the ablation results of LoopGuard.
Without any guard, all sequences collapse into loops, which confirms that the prompts induce a highly stable degenerate regime under the fixed KV budget.
Always-on pruning offers only a limited reduction in loop rate (79.8\%) even though it intervenes very frequently (12.5 times per sequence on average).
More importantly, over half of its triggers happen early (57.6\%), suggesting that it prunes the cache before clear degeneration emerges, which disrupts normal generation yet fails to break the self reinforcing loop once it forms.
This highlights that aggressive pruning alone is not sufficient; the intervention must be timed and targeted.

The single signal variant that relies on the top 1 confidence streak ($p_1$) improves loop suppression substantially, lowering the loop rate to 35.4\%.
However, it still requires 7.6 interventions per sequence and shows a noticeable fraction of early triggers (21.2\%).
This pattern indicates that a single confidence cue is informative but not decisive: high confidence can arise from benign formatting or locally predictable spans, so triggering on $p_1$ alone tends to intervene too often and sometimes too early.
In contrast, the full LoopGuard achieves the lowest loop rate (1.3\%) with fewer interventions (4.8 on average) and almost no early triggers (1.7\%).
These results support the full design choice that combines multiple lightweight signals with a debounced trigger, so interventions are issued mainly after persistent degeneration is corroborated, which improves precision and reduces unnecessary cache modifications.

\begin{table}[t]
\centering
\small
\setlength{\tabcolsep}{1.5pt}  
\begin{tabular}{l c c c}
\toprule
Variant 
& Loop Rate (\%)$\downarrow$ 
& Avg. \#Int.$\downarrow$ 
& Early (\%)$\downarrow$ \\
\midrule
No Guard & 100.0 & 0.00 & -- \\
Always-on Pruning & 79.8 & 12.5 & 57.6 \\
Single-signal ($p_1$) & 35.4 & 7.6 & 21.2 \\
\textbf{Full LoopGuard} & \textbf{1.3} & \textbf{4.8} & \textbf{1.7} \\
\bottomrule
\end{tabular}
\caption{Ablation study of LoopGuard variants on LoopBench.
We report loop rate, average number of interventions per sequence (Avg.~\#Int.), and the fraction of early triggers.
Lower is better for all metrics.}
\label{tab:ablation_loopguard}
\end{table}

\section{Conclusion}

In this study, we investigated persistent repetition loops in long-context decoding and linked them to an attention-level collapse where a subset of heads locks onto a narrow suffix, which KV cache reuse then reinforces via feedback. We introduced LoopBench with loop-oriented metrics beyond downstream task scores, and proposed LoopGuard, an event-driven KV cache intervention that detects loop onset and breaks the feedback cycle by pruning repetitive tails while retaining anchors and sparse long-range context under a fixed budget. Experiments show that LoopGuard cuts loop incidence by over 90 percentage points, restores output diversity, and preserves performance on standard long-context QA.

\section*{Limitations}

This work focuses on inference-time mitigation of persistent repetition loops through KV cache interventions, and does not address degeneration rooted in training objectives or model architectures. While LoopGuard is effective at disrupting stabilized loops once collapse occurs, it is not designed to prevent all forms of low-diversity generation, such as semantically repetitive but lexically varied outputs that may not trigger our detection signals. In addition, our evaluation centers on controlled loop-inducing scenarios to enable reproducible analysis of reliability failures under long-horizon decoding. Although LoopBench isolates dynamics coupled with the KV cache, it does not cover all failure modes that may arise in open-ended or interactive generation settings. Extending reliability evaluation to broader regimes and combining inference-time guards with complementary training-time approaches remain important directions for future study.





\bibliography{custom}

\appendix

\section{Additional Theoretical Analysis for RoPE KV Cache}
\label{sec:app_rope_kv}

\subsection{Explicit RoPE Formulation}
We briefly recall the explicit construction of rotary position embeddings (RoPE)~\cite{su2024roformer}
and show how it yields the block-wise relative-offset form used in the main text.
Let $d_h$ be the per-head dimension and assume $d_h$ is even.
Each query and key vector is partitioned into $d_h/2$ two-dimensional blocks,
\(
q_t = (q_t^{(1)}, \dots, q_t^{(d_h/2)}),
\;
k_i = (k_i^{(1)}, \dots, k_i^{(d_h/2)}).
\)
For block $j$, RoPE applies a position-dependent rotation
\begin{equation}
\tilde q_t^{(j)} = R(\omega_j t)\, q_t^{(j)},
\qquad
\tilde k_i^{(j)} = R(\omega_j i)\, k_i^{(j)},
\end{equation}
where
\(
R(\theta) =
\begin{pmatrix}
\cos\theta & -\sin\theta \\
\sin\theta & \cos\theta
\end{pmatrix}
\)
and $\omega_j$ is the frequency associated with block $j$.
The attention score is then
\begin{equation}
s_{t,i}
= \sum_{j=1}^{d_h/2}
(\tilde q_t^{(j)})^\top \tilde k_i^{(j)}
= \sum_{j=1}^{d_h/2}
(q_t^{(j)})^\top R\!\big(\omega_j (t-i)\big)\, k_i^{(j)},
\end{equation}
which recovers Eq.~(\ref{eq:rope_block}) in the main text.
Notably, the score depends on positions only through the relative offset $(t-i)$.

\subsection{Approximate Score Invariance and Error Bounds}
We formalize the approximation in Eq.~(\ref{eq:rope_equivariance}) by bounding the score difference under periodic tail repetition.
Let $\mathcal{B}_t$ denote the (short) tail window on which attention concentrates after collapse.
Assume there exists a period $P$ such that, for the relevant tail region,
\begin{equation}
\label{eq:approx_periodic_assump}
\begin{aligned}
\|q_t - q_{t-P}\|_2 &\le \delta_q, \\
\|k_i - k_{i-P}\|_2 &\le \delta_k, \quad \forall\, i \in \mathcal{B}_t .
\end{aligned}
\end{equation}

for small $\delta_q,\delta_k$, where $\|\cdot\|_2$ denotes the Euclidean norm.

\begin{lemma}[Approximate shift invariance]
\label{lem:approx_shift_invariance}
Under assumption~\eqref{eq:approx_periodic_assump}, the RoPE attention scores satisfy
\begin{equation}
\label{eq:score_diff_bound}
\begin{aligned}
& \big| s_{t,i} - s_{t-P,i-P} \big|
\le \sum_{j=1}^{d_h/2}
\Big(
\|k_{i-P}^{(j)}\|_2 \delta_q
\\
& \quad  
+
\|q_{t-P}^{(j)}\|_2 \delta_k
+
\delta_q \delta_k
\Big),
\quad \forall\, i \in \mathcal{B}_t .
\end{aligned}
\end{equation}

\end{lemma}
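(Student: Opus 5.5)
The plan is to work block by block with the explicit RoPE form $s_{t,i}=\sum_j (q_t^{(j)})^\top R(\omega_j(t-i))k_i^{(j)}$ from the previous subsection. The key observation is that the relative offset is preserved under the joint shift, since $(t-P)-(i-P)=t-i$. Both $s_{t,i}$ and $s_{t-P,i-P}$ therefore use the \emph{same} rotation $R_j:=R(\omega_j(t-i))$ in each block. Writing $\Delta q^{(j)}=q_t^{(j)}-q_{t-P}^{(j)}$ and $\Delta k^{(j)}=k_i^{(j)}-k_{i-P}^{(j)}$, the blockwise difference becomes
\[
(q_{t-P}^{(j)}+\Delta q^{(j)})^\top R_j (k_{i-P}^{(j)}+\Delta k^{(j)}) - (q_{t-P}^{(j)})^\top R_j k_{i-P}^{(j)} .
\]
This expands into three bilinear terms:
\[
(\Delta q^{(j)})^\top R_j k_{i-P}^{(j)} + (q_{t-P}^{(j)})^\top R_j \Delta k^{(j)} + (\Delta q^{(j)})^\top R_j \Delta k^{(j)} .
\]

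Next I apply the triangle inequality, first across blocks and then across the three terms. Each term is bounded by Cauchy--Schwarz together with the fact that $R_j$ is orthogonal, so $\|R_j x\|_2=\|x\|_2$. This gives
\[
\|\Delta q^{(j)}\|\,\|k_{i-P}^{(j)}\| + \|q_{t-P}^{(j)}\|\,\|\Delta k^{(j)}\| + \|\Delta q^{(j)}\|\,\|\Delta k^{(j)}\| .
\]

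Finally, every block of a vector has norm at most that of the whole vector, so $\|\Delta q^{(j)}\|_2\le \|q_t-q_{t-P}\|_2\le\delta_q$ by assumption~\eqref{eq:approx_periodic_assump}, and likewise $\|\Delta k^{(j)}\|_2\le\delta_k$ for $i\in\mathcal{B}_t$. Substituting these bounds and summing over $j$ yields exactly \eqref{eq:score_diff_bound}.

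There is no real obstacle here. The only points needing care are the two noted above. First, the shift must be applied jointly to $t$ and $i$, so that the rotation angle cancels exactly; this is what makes the rotation a common isometry factor rather than a source of error. Second, the per-block deviation has to be controlled by the full-vector $\delta$'s. The stated bound is loose, because each block is charged the full $\delta_q$ and $\delta_k$. A sharper version would keep the per-block norms and use Cauchy--Schwarz across blocks, but that is not needed for the statement as written.
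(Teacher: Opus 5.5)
Your proof is correct and follows essentially the same route as the paper: both use $(t-P)-(i-P)=t-i$, so that the same orthogonal rotation appears blockwise in both scores, and then bound via Cauchy--Schwarz and the isometry of $R$. The only differences are cosmetic. The paper splits into two terms and recovers the $\delta_q\delta_k$ term through $\|k_i^{(j)}\|_2\le\|k_{i-P}^{(j)}\|_2+\|k_i^{(j)}-k_{i-P}^{(j)}\|_2$, whereas you expand into three bilinear terms directly. You also make explicit the step $\|\Delta q^{(j)}\|_2\le\|q_t-q_{t-P}\|_2$, which the paper leaves implicit.
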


\begin{proof}
By the RoPE score form, and since $(t-i)=(t-P)-(i-P)$, the rotation term
$R\!\big(\omega_j(t-i)\big)$ is identical in both scores.
For each block $j$, write
\begin{equation}
\begin{aligned}
& (q_t^{(j)})^\top R\!\big(\omega_j(t-i)\big) k_i^{(j)}
-
(q_{t-P}^{(j)})^\top R\!\big(\omega_j(t-i)\big) k_{i-P}^{(j)}
\\
& \quad =
\big(q_t^{(j)}-q_{t-P}^{(j)}\big)^\top R(\cdot)\, k_i^{(j)}
\\
& \qquad +
(q_{t-P}^{(j)})^\top R(\cdot)\, \big(k_i^{(j)}-k_{i-P}^{(j)}\big).
\end{aligned}
\end{equation}
Taking absolute values and applying triangle inequality and Cauchy--Schwarz,
together with $\|R(\cdot)u\|_2=\|u\|_2$ (orthogonality of $R$), yields
\begin{equation}
\begin{aligned}
& \Big|
(q_t^{(j)}-q_{t-P}^{(j)})^\top R(\cdot)\, k_i^{(j)}
\Big|
\le
\|q_t^{(j)}-q_{t-P}^{(j)}\|_2\,\|k_i^{(j)}\|_2,
\\
& \Big|
(q_{t-P}^{(j)})^\top R(\cdot)\, (k_i^{(j)}-k_{i-P}^{(j)})
\Big|
\le
\|q_{t-P}^{(j)}\|_2\,\|k_i^{(j)}-k_{i-P}^{(j)}\|_2.
\end{aligned}
\end{equation}
Using $\|k_i^{(j)}\|_2 \le \|k_{i-P}^{(j)}\|_2 + \|k_i^{(j)}-k_{i-P}^{(j)}\|_2$
and the bounds in~\eqref{eq:approx_periodic_assump}, and summing over $j$, gives~\eqref{eq:score_diff_bound}.
\end{proof}

When $\delta_q$ and $\delta_k$ are small, the score difference is negligible, justifying the approximate invariance used in the main text.
Moreover, softmax is a smooth mapping; under bounded logits, small perturbations in scores induce bounded changes in attention weights, so the attention distribution over $\mathcal{B}_t$ varies slowly along the repetition trajectory.

\subsection{Stability Sketch of the Tail Operator}
We provide a sketch explaining why the tail operator in Eq.~(\ref{eq:tail_operator}) can stabilize repetition under KV reuse.
Consider the cache-coupled decoding dynamics
\begin{equation}
\begin{aligned}
h_{t+1} &= F(h_t, \mathrm{KV}_t), \\
\mathrm{KV}_{t+1} &= \mathrm{Append}(\mathrm{KV}_t, k_{t+1}, v_{t+1}),
\end{aligned}
\end{equation}
and suppose that, after collapse, attention mass concentrates on a short tail index set $\mathcal{B}_t$.

Under approximate score invariance (Lemma~\ref{lem:approx_shift_invariance}), the attention weights over $\mathcal{B}_t$ vary slowly across steps.
Consequently, the induced tail operator
\(
\mathcal{T}_{\mathrm{tail}}(\{v_i\}_{i\in\mathcal{B}_t})
=
\sum_{i\in\mathcal{B}_t}\alpha_{t,i}\,v_i
\)
remains approximately time-invariant along the repetition trajectory.
Repeatedly appending near-duplicate tail states therefore induces a self-consistent update restricted to a low-dimensional ``tail'' subspace.

Linearizing the cache-coupled dynamics around a periodic repetition trajectory yields a local recurrence for perturbations.
If the linearized update is locally contractive on directions orthogonal to the tail subspace (e.g., the corresponding Jacobian has spectral radius $<1$ on those directions), then such perturbations decay, resulting in a stable low-diversity attractor.
Breaking this attractor thus requires an explicit perturbation to the KV cache, motivating cache-level interventions.

\section{Implementation Details and Hyperparameter Settings}
\label{sec:appendix_settings}

\subsection{Threshold Settings for Loop Detection}
\label{sec:appendix_thresholds}

In Eq.~\eqref{eq:loop_rule}, we use fixed thresholds for the diversity and redundancy metrics to identify persistent repetition loops.
Specifically, we set the Type--Token Ratio threshold to $\theta_{\mathrm{TTR}} = 0.2$ and the Compression Ratio threshold to $\theta_{\mathrm{CR}} = 0.12$ across all benchmarks and models.

The choice of $\theta_{\mathrm{TTR}} = 0.2$ reflects a strong collapse in lexical diversity: fewer than 20\% of generated tokens are unique, which rarely occurs in healthy long-context generation but is typical once repetition loops emerge.
Similarly, $\theta_{\mathrm{CR}} = 0.12$ corresponds to a high degree of global redundancy under lossless compression, indicating extensive repetition beyond short-range patterns.

We observe consistent separation between normal generation and persistent looping behavior across LoopBench-DC and LoopBench-RI.
Qualitatively similar trends hold under moderate threshold variations, suggesting that loop detection is not sensitive to precise tuning.

\subsection{Online Detection: Parameters and Criteria}

This section summarizes the hyperparameters and criteria used by LoopGuard for online loop detection.

\noindent \textbf{Windowed indicators.} All statistics are computed over a sliding window of $W=256$ tokens.
We monitor lexical diversity (Type--Token Ratio) and redundancy (Compression Ratio),
with thresholds $\theta_{\mathrm{ttr}}=0.2$ and $\theta_{\mathrm{cr}}=0.12$, respectively.
These values consistently separate healthy decoding from collapsed repetition across models and prompts.

\noindent \textbf{Confidence collapse.} We track the top-1 prediction probability
$p_{\max}(t)=\max_v \Pr(y_{t+1}=v\mid y_{1:t})$
and maintain a streak counter that increments when $p_{\max}(t)>\theta_{\mathrm{conf}}=0.9$.
A confidence-collapse signal is raised when the streak exceeds $\theta_{\mathrm{streak}}=6$ steps.

\noindent \textbf{Persistence gates.} To suppress transient repetition, LoopGuard additionally requires persistence evidence.
We use (i) a novelty-based stagnation test with threshold $\theta_{\mathrm{nov}}=0.02$ and
$\theta_{\mathrm{stall}}=4$ consecutive steps, and
(ii) a debounced suffix repetition check that detects short-period tail alignment.
Details of the suffix-matching procedure are omitted for brevity.

\noindent \textbf {Triggering constraints.} An intervention is permitted only after a warmup of $t_{\min}=64$ decoding steps
and is followed by a cooldown window of $32$ steps.
All hyperparameters are shared across models and benchmarks without per-instance tuning.

\section{LoopBench Construction Details}
\label{sec:app_loopbench_data}

This appendix details how we construct \textbf{LoopBench}, a controlled set of loop-inducing prompts designed to elicit persistent repetition collapse under long-horizon decoding.
LoopBench contains two subsets with distinct surface forms but a shared failure pattern: an abrupt diversity collapse followed by a stable repetitive regime.

\subsection{LoopBench-DC: Data-Constraint Structured Generation}
\label{sec:app_loopbench_dc}

\textbf{Goal.} LoopBench-DC targets schema-constrained structured generation under long and noisy contexts, where loops typically manifest as duplicated fields or repeated JSON fragments after partial schema satisfaction.

\textbf{Prompt format.} Each instance is a single long prompt consisting of:
(i) a fixed \emph{instruction header} requesting entity extraction and normalization (e.g., mapping entity mentions to Wikipedia titles),
(ii) an explicit \emph{output schema constraint} specifying the required JSON item fields \{\texttt{name}, \texttt{wikipedia url}, \texttt{type}\} and the allowed type set \{\texttt{Organization}, \texttt{Person}, \texttt{Location}, \texttt{Event}, \texttt{Other}\},
and (iii) a \emph{dirty context} paragraph containing long-form transcripts with substantial redundancy and repeated spans.
This design makes the task semantically unambiguous (extract-and-list) while enforcing strict formatting, which exposes loop persistence once the model begins to repeat suffix fragments.

\textbf{Source and size.} We repurpose the MultiNews subset of LongBench by replacing summarization with the above strict JSON-based entity extraction prompt.
LoopBench-DC contains \textbf{300} prompts, with an average prompt length of \textbf{3,853} tokens (tokenized on the full prompt text, excluding generated outputs).

\subsection{LoopBench-RI: Infinite Recursive Interpretation}
\label{sec:app_loopbench_ri}

\textbf{Goal.} LoopBench-RI targets recursive instruction following over long narratives, where loops typically appear as template-like repetitions in self-corrections and expansions.

\textbf{Prompt format.} Each instance contains:
(i) a long \emph{source narrative} (book-style or story-like text),
(ii) a strict \emph{recursive protocol} that forces the model to iterate \texttt{Draft Interpretation} $\rightarrow$ \texttt{Self-Correction} $\rightarrow$ \texttt{Recursive Expansion} $\rightarrow$ \texttt{Repeat}, and
(iii) a short \emph{logic-stream seed history} that provides 1--2 completed iterations (draft + self-correction) and then truncates at the next \texttt{Draft} header.
By construction, the model is required to continue the loop immediately, which encourages persistent generation and makes repetition collapse highly reproducible under deterministic decoding.

\textbf{Source and size.} We repurpose the NarrativeQA subset of LongBench by replacing question answering with the above self-referential recursion protocol.
LoopBench-RI contains \textbf{200} prompts, with an average prompt length of \textbf{5,376} tokens (tokenized on the full prompt text, excluding generated outputs).

\subsection{Reproducibility Notes}
\label{sec:app_loopbench_repro}

Across both subsets, prompts are designed to minimize semantic ambiguity while imposing strong structural constraints (schema or recursion), so that once a repetition trajectory is entered, it becomes stable and persistent under long-horizon decoding.
Representative prompt templates and qualitative looping outputs are provided separately in Appendix~\ref{sec:loop_example}.

\section{Representative Looping Outputs}
\label{sec:loop_example}
This appendix lists representative model outputs under loop-inducing prompts in LoopBench-DC.
We include examples from different KV cache strategies, where baseline methods may exhibit duplicated entries and sustained tail repetition once the schema is partially satisfied.
These qualitative cases complement the main quantitative results by illustrating typical failure patterns in practice.

\section{Attention Visualization under Repetition Loops}
\label{sec:appendix_attention}

This appendix provides attention visualizations for representative loop-inducing generations.
Figure~\ref{fig:appendix_attention_barcode} shows attention maps from selected heads across multiple layers
during long-context decoding.
Across layers and heads, attention concentrates into narrow, vertically aligned stripes,
indicating persistent focus on a small repetitive suffix of the generated history.
These barcode-like patterns qualitatively support our analysis that repetition loops
are driven by attention-level collapse and head locking.

\clearpage
\onecolumn
\begin{tcolorbox}[
    breakable,                 
    enhanced,
    colback=white,             
    colframe=black,            
    sharp corners,             
    boxrule=0.8pt,             
    title=\textbf{Input (Full Prompt)}, 
    coltitle=black,
    colbacktitle=gray!20,      
    attach boxed title to top left={yshift=-\tcboxedtitleheight},
    boxed title style={frame hidden, size=small, sharp corners},
    top=1.5em                  
]
    \small 
    \textbf{Context:} \textbf{Extract a list of the people, organizations, locations and other entities in the following text, disambiguating and normalizing each name to the title of its corresponding Wikipedia article. Classify the Type of each as one of the following: "Organization, Person, Location, Event, Other". Output the results in JSON format: }\textit{(Context truncated)} tonight, several developing stories as we come on the air. news coming in, the suspected chinese spy balloon hovering over the u.s. where it is now, and will there be a window to shoot it down? also tonight, the dangerous life-threatening cold moving into the northeast. the live readings already. first, the pentagon tracking that chinese balloon, the intelligence bay hanging beneath it, the size of three busses. where it was spotted today over the u.s., where it's believed to be headed now, 60,000 feet in the air. will the u.s. shoot this down? secretary of state antony blinken postponing his high stakes trip to china. mola lenghi, mary bruce standing by at the white house. the deadly cold already tonight, the national weather service is calling it a once in a generation arctic blast. 25 million americans, multiple states. new york city, philadelphia, boston, wind-chill readings in maine expected to reach 60 below zero. and where the wind chill is already 106 degrees below zero tonight. \dots so where is it tonight, and where u.s. authorities believe it's headed. among the first sightings over billings, montana. that intelligence bay hanging beneath the balloon, that's the size of three busses. the pentagon tonight on discussions over whether or not to shoot it down, and will there be a small window of opportunity to do just that? what we're learning tonight. secretary of state antony blinken postponing his high-stakes trip to beijing, calling china's actions unacceptable. so, will this be shot down? if so, where? and what china is saying tonight. mary bruce at the white house, and abc's mola lenghi leading us off tonight in montana. reporter: tonight, the pentagon confirming the massive chinese spy balloon is on the move 60,000 feet above the ground and heading east. the balloon continues t move eastward and is currently over the center of the continental united states. what the heck is that? reporter: a senior u.s. official tells abc news the balloon now appears headed towards north carolina. across the country today, americans with their eyes on the skies, posting images like this one, the balloon floating over missouri. commercial pilots radioing in. we got that balloon in sight also. looks like it's way up there, maybe 50,000 feet or so. reporter: the balloon is huge with a technology bay attached below that is itself the size of three busses, loaded with high resolution cameras, according to a senior u.s. official, equipped with what appear to be solar panels on the side that could power its technology. it first entered american airspace over alaska, then flying into southwest canada before dipping down over billings, montana. this thing is up in the sky. what the heck is that? that thing is not the moon. any help would be appreciated. reporter: montana republican congressman ryan zinke was blunt, tweeting ``shoot. it. down.'' the pentagon today sayig that option was considered and rejected, for now. we assessed that currently it does not pose a physical or military risk to people on the ground. for now we are continuing to monitor and review options. reporter: the white house today saying president biden agreed with the pentagon's strong recommendation. the risks involved with shooting down the balloon just too great. any potential debris field would be significant and potentially cause civilian injuries or deaths or significant property damage. reporter: for its part, china insisting today the balloon is not a spy vessel at all but a civilian airship used for research, mainly meteorological purposes. the pentagon flatly rejecting that claim. we are aware of the prc's statement. however, the fact is, we know that it's a surveillance pballo. we have clearly communicated that this balloon is violating u.s. air space and international law and that this is unacceptable. reporter: china says the balloon deviated far from its planned course, but the pentagon questioning that as well, saying the balloon can be maneuvered. a senior u.s. official telling abc news, it's likely being navigated with spy satellites. this all happening as secretary of state antony blinken prepared for a weekend trip to beijing.
\end{tcolorbox}

\newcommand{\dup}[1]{%
  \begingroup
  \sethlcolor{yellow!25}%

  \soulregister\_7
  \soulregister\{7
  \soulregister\}7
  \soulregister\textbf7
  \soulregister\dots7

  \def\_{\char95\allowbreak}%

  \makeatletter
  \def\SOUL@uleveryspace##1{%
    ##1\SOUL@ulcolor{\SOUL@ulleaders}\hskip\spaceskip
  }%
  \makeatother

  \hl{#1}%
  \endgroup
}

\newcommand{\loopwarn}[1]{%
  \begingroup
  \setlength{\fboxsep}{1.2pt}%
  \fcolorbox{red!85!black}{white}{\strut\textbf{[#1]}}%
  \endgroup
}

\begin{tcolorbox}[
    breakable,
    enhanced,
    colback=white,
    colframe=black,
    sharp corners,
    boxrule=0.8pt,
    title=\textbf{Output (Full Cache)},
    coltitle=black,
    colbacktitle=gray!20,
    attach boxed title to top left={yshift=-\tcboxedtitleheight},
    boxed title style={frame hidden, size=small, sharp corners},
    top=1.5em
]
    \ttfamily\small
    \setlength{\parindent}{0pt}\setlength{\parskip}{0pt}
    \noindent \{"name":"Pentagon","wikipedia url":"\dots","type":"Organization"\},\par
    \noindent \{"name":"U.S. Department of State","wikipedia url":"\dots","type":"Organization"\},\par
    \noindent \{"name":"U.S. Air Force","wikipedia url":"\dots","type":"Organization"\},\par
    \noindent \{"name":"U.S. Army","wikipedia url":"\dots","type":"Organization"\},\par
    \noindent \{"name":"U.S. Navy","wikipedia url":"\dots","type":"Organization"\},\par
    \noindent \{"name":"U.S. Coast Guard","wikipedia url":"\dots","type":"Organization"\},\par
    \noindent \{"name":"U.S. Marine Corps","wikipedia url":"\dots","type":"Organization"\},\par
    \noindent \{"name":"U.S. Army Corps of Engineers","wikipedia url":"\dots","type":"Organization"\},\par
    \noindent \{"name":"U.S. National Weather Service","wikipedia url":"\dots","type":"Organization"\},\par
    \noindent \{"name":"U.S. Department of Homeland Security","wikipedia url":"\dots","type":"Organization"\},\par
    \noindent \{"name":"U.S. Department of Justice","wikipedia url":"\dots","type":"Organization"\},\par
    \noindent \{"name":"U.S. Department of the Interior","wikipedia url":"\dots","type":"Organization"\},\par
    \noindent \{"name":"U.S. Department of the Treasury","wikipedia url":"\dots","type":"Organization"\},\par
    \noindent \{"name":"U.S. Department of Education","wikipedia url":"\dots","type":"Organization"\},\par
    \noindent \{"name":"U.S. Department of Veterans Affairs","wikipedia url":"\dots","type":"Organization"\},\par
    \noindent \{"name":"U.S. Department of Agriculture","wikipedia url":"\dots","type":"Organization"\},\par
    \noindent \{"name":"U.S. Department of Energy","wikipedia url":"\dots","type":"Organization"\},\par
    \noindent \{"name":"U.S. Department of Commerce","wikipedia url":"\dots","type":"Organization"\},\par
    \noindent \{"name":"U.S. Department of Justice","wikipedia url":"\dots","type":"Organization"\},\par
    \noindent \{"name":"U.S. Department of Veterans Affairs","wikipedia url":"\dots","type":"Organization"\},\par
    \noindent \dots\par
    \noindent \dup{\{"name":"U.S. Air Force Academy","wikipedia url":"\dots","type":"Organization"\},}\par
    \noindent \dup{\{"name":"U.S. Air Force Academy","wikipedia url":"\dots","type":"Organization"\},}\par
    \noindent \dup{\{"name":"U.S. Air Force Academy","wikipedia url":"\dots","type":"Organization"\},}\par
    \noindent \dup{\{"name":"U.S. Air Force Academy","wikipedia url":"\dots","type":"Organization"\},}\par
    \noindent \dup{\{"name":"U.S. Air Force Academy","wikipedia url":"\dots","type":"Organization"\},}\par
    \noindent \dup{\{"name":"U.S. Air Force Academy","wikipedia url":"\dots","type":"Organization"\},}\par
    \noindent \dup{\{"name":"U.S. Air Force Academy","wikipedia url":"\dots","type":"Organization"\},}\par
    \noindent \dup{\{"name":"U.S. Air Force Academy","wikipedia url":"\dots","type":"Organization"\},}\par
    \noindent \dup{\{"name":"U.S. Air Force Academy","wikipedia url":"\dots","type":"Organization"\},}\par
    \noindent \dup{\{"name":"U.S. Air Force Academy","wikipedia url":"\dots","type":"Organization"\},}\par
    \noindent \dup{\dots}\par
    \noindent \loopwarn{Model enters infinite loop}\par
\end{tcolorbox}

\begin{tcolorbox}[
    breakable,
    enhanced,
    colback=white,
    colframe=black,
    sharp corners,
    boxrule=0.8pt,
    title=\textbf{Output (StreamingLLM)},
    coltitle=black,
    colbacktitle=gray!20,
    attach boxed title to top left={yshift=-\tcboxedtitleheight},
    boxed title style={frame hidden, size=small, sharp corners},
    top=1.5em
]
    \ttfamily\small
    \setlength{\parindent}{0pt}\setlength{\parskip}{0pt}
    \noindent \{"name":"Pentagon","wikipedia url":"\dots","type":"Organization"\},\par
    \noindent \{"name":"U.S. Department of State","wikipedia url":"\dots","type":"Organization"\},\par
    \noindent \{"name":"China","wikipedia url":"\dots","type":"Organization"\},\par
    \noindent \{"name":"Mary Bruce","wikipedia url":"\dots","type":"Person"\},\par
    \noindent \{"name":"Mola Lenghi","wikipedia url":"\dots","type":"Person"\},\par
    \noindent \{"name":"Rob Marciano","wikipedia url":"\dots","type":"Person"\},\par
    \noindent \{"name":"Victor Oquendo","wikipedia url":"\dots","type":"Person"\},\par
    \noindent \{"name":"Eva Pilgrim","wikipedia url":"\dots","type":"Person"\},\par
    \noindent \{"name":"Robin Roberts","wikipedia url":"\dots","type":"Person"\},\par
    \noindent \{"name":"Alex Murdaugh","wikipedia url":"\dots","type":"Person"\},\par
    \noindent \{"name":"National Weather Service","wikipedia url":"\dots","type":"Organization"\},\par
    \noindent \{"name":"United States","wikipedia url":"\dots","type":"Location"\},\par
    \noindent \{"name":"Missouri","wikipedia url":"\dots","type":"Location"\},\par
    \noindent \{"name":"Montana","wikipedia url":"\dots","type":"Location"\},\par
    \noindent \{"name":"North Carolina","wikipedia url":"\dots","type":"Location"\},\par
    \noindent \{"name":"Alaska","wikipedia url":"\dots","type":"Location"\},\par
    \noindent \{"name":"Billings, Montana","wikipedia url":"\dots","type":"Location"\},\par
    \noindent \{"name":"New York City","wikipedia url":"\dots","type":"Location"\},\par
    \noindent \{"name":"Philadelphia","wikipedia url":"\dots","type":"Location"\},\par
    \noindent \{"name":"Boston","wikipedia url":"\dots","type":"Location"\},\par
    \noindent \{"name":"Maine","wikipedia url":"\dots","type":"Location"\},\par
    \noindent \{"name":"Wind chill","wikipedia url":"\dots","type":"Other"\},\par
    \noindent \{"name":"Arctic blast","wikipedia url":"\dots","type":"Other"\},\par
    \noindent \{"name":"Unemployment rate","wikipedia url":"\dots","type":"Other"\},\par
    \noindent \{"name":"Federal Reserve Bank","wikipedia url":"\dots","type":"Other"\},\par
    \noindent \{"name":"Weather balloon","wikipedia url":"\dots","type":"Other"\},\par
    \noindent \{"name":"Surveillance balloon","wikipedia url":"\dots","type":"Other"\},\par
    \noindent \dots\par
    \noindent \dup{\{"name":"Federal Reserve","wikipedia url":"\dots","type":"Other"\},}\par
    \noindent \dup{\{"name":"Jobs report","wikipedia url":"\dots","type":"Other"\},}\par
    \noindent \dup{\{"name":"Unemployment rate","wikipedia url":"\dots","type":"Other"\},}\par
    \noindent \dup{\{"name":"Federal Reserve Bank","wikipedia url":"\dots","type":"Other"\},}\par
    \noindent \dup{\{"name":"Jobs report","wikipedia url":"\dots","type":"Other"\},}\par
    \noindent \dup{\dots}\par
    \noindent \loopwarn{Model enters infinite loop}\par
\end{tcolorbox}

\begin{tcolorbox}[
    breakable,
    enhanced,
    colback=white,
    colframe=black,
    sharp corners,
    boxrule=0.8pt,
    title=\textbf{Output (H2o)},
    coltitle=black,
    colbacktitle=gray!20,
    attach boxed title to top left={yshift=-\tcboxedtitleheight},
    boxed title style={frame hidden, size=small, sharp corners},
    top=1.5em
]
    \ttfamily\small
    \setlength{\parindent}{0pt}\setlength{\parskip}{0pt}
    \noindent \{"name":"Pentagon","wikipedia url":"\dots","type":"Organization"\},\par
    \noindent \{"name":"U.S. Department of State","wikipedia url":"\dots","type":"Organization"\},\par
    \noindent \{"name":"U.S. Air Force","wikipedia url":"\dots","type":"Organization"\},\par
    \noindent \{"name":"U.S. Army","wikipedia url":"\dots","type":"Organization"\},\par
    \noindent \{"name":"U.S. Navy","wikipedia url":"\dots","type":"Organization"\},\par
    \noindent \{"name":"U.S. Coast Guard","wikipedia url":"\dots","type":"Organization"\},\par
    \noindent \{"name":"U.S. Marine Corps","wikipedia url":"\dots","type":"Organization"\},\par
    \noindent \{"name":"U.S. Army Corps of Engineers","wikipedia url":"\dots","type":"Organization"\},\par
    \noindent \{"name":"U.S. National Weather Service","wikipedia url":"\dots","type":"Organization"\},\par
    \noindent \{"name":"U.S. Department of Homeland Security","wikipedia url":"\dots","type":"Organization"\},\par
    \noindent \dots\par
    \noindent \dup{\{"name":"U.S. Department of Justice","wikipedia url":"\dots","type":"Organization"\},}\par
    \noindent \dup{\{"name":"U.S. Department of the Interior","wikipedia url":"\dots","type":"Organization"\},}\par
    \noindent \dup{\{"name":"U.S. Department of the Treasury","wikipedia url":"\dots","type":"Organization"\},}\par
    \noindent \dup{\{"name":"U.S. Department of Education","wikipedia url":"\dots","type":"Organization"\},}\par
    \noindent \dup{\{"name":"U.S. Department of Veterans Affairs","wikipedia url":"\dots","type":"Organization"\},}\par
    \noindent \dup{\{"name":"U.S. Department of Agriculture","wikipedia url":"\dots","type":"Organization"\},}\par
    \noindent \dup{\{"name":"U.S. Department of Energy","wikipedia url":"\dots","type":"Organization"\},}\par
    \noindent \dup{\{"name":"U.S. Department of Commerce","wikipedia url":"\dots","type":"Organization"\},}\par
    \noindent \dup{\{"name":"U.S. Department of Justice","wikipedia url":"\dots","type":"Organization"\},}\par
    \noindent \dup{\{"name":"U.S. Department of the Interior","wikipedia url":"\dots","type":"Organization"\},}\par
    \noindent \dup{\{"name":"U.S. Department of the Treasury","wikipedia url":"\dots","type":"Organization"\},}\par
    \noindent \dup{\{"name":"U.S. Department of Veterans Affairs","wikipedia url":"\dots","type":"Organization"\},}\par
    \noindent \dup{\{"name":"U.S. Department of Agriculture","wikipedia url":"\dots","type":"Organization"\},}\par
    \noindent \dup{\{"name":"U.S. Department of Energy","wikipedia url":"\dots","type":"Organization"\},}\par
    \noindent \dup{\{"name":"U.S. Department of Commerce","wikipedia url":"\dots","type":"Organization"\},}\par
    \noindent \dup{\{"name":"U.S. Department of the Interior","wikipedia url":"\dots","type":"Organization"\},}\par
    \noindent \dup{\{"name":"U.S. Department of the Treasury","wikipedia url":"\dots","type":"Organization"\},}\par
    \noindent \dup{\{"name":"U.S. Department of Agriculture","wikipedia url":"\dots","type":"Organization"\},}\par
    \noindent \dup{\{"name":"U.S. Department of Energy","wikipedia url":"\dots","type":"Organization"\},}\par
    \noindent \dup{\{"name":"U.S. Department of Commerce","wikipedia url":"\dots","type":"Organization"\},}\par
    \noindent \dup{\{"name":"U.S. Department of the Interior","wikipedia url":"\dots","type":"Organization"\},}\par
    \noindent \dup{\dots}\par
    \noindent \loopwarn{Model enters infinite loop}\par
\end{tcolorbox}

\begin{tcolorbox}[
    breakable,
    enhanced,
    colback=white,
    colframe=black,
    sharp corners,
    boxrule=0.8pt,
    title=\textbf{Output (Our Method)},
    coltitle=black,
    colbacktitle=gray!20,
    attach boxed title to top left={yshift=-\tcboxedtitleheight},
    boxed title style={frame hidden, size=small, sharp corners},
    top=1.5em
]
    \ttfamily\small
    \setlength{\parindent}{0pt}\setlength{\parskip}{0pt}
    \noindent \{"name":"Pentagon","wikipedia url":"\dots","type":"Organization"\},\par
    \noindent \{"name":"U.S. Department of State","wikipedia url":"\dots","type":"Organization"\},\par
    \noindent \{"name":"China","wikipedia url":"\dots","type":"Organization"\},\par
    \noindent \{"name":"Secretary of State Antony Blinken","wikipedia url":"\dots","type":"Person"\},\par
    \noindent \{"name":"Mary Bruce","wikipedia url":"\dots","type":"Person"\},\par
    \noindent \{"name":"Mola Lenghi","wikipedia url":"\dots","type":"Person"\},\par
    \noindent \{"name":"Rob Marciano","wikipedia url":"\dots","type":"Person"\},\par
    \noindent \{"name":"Victor Oquendo","wikipedia url":"\dots","type":"Person"\},\par
    \noindent \{"name":"Eva Pilgrim","wikipedia url":"\dots","type":"Person"\},\par
    \noindent \{"name":"Robin Roberts","wikipedia url":"\dots","type":"Person"\},\par
    \noindent \{"name":"Alex Murdaugh","wikipedia url":"\dots","type":"Person"\},\par
    \noindent \{"name":"National Weather Service","wikipedia url":"\dots","type":"Organization"\},\par
    \noindent \{"name":"United States","wikipedia url":"\dots","type":"Location"\},\par
    \noindent \{"name":"Missouri","wikipedia url":"\dots","type":"Location"\},\par
    \noindent \{"name":"Montana","wikipedia url":"\dots","type":"Location"\},\par
    \noindent \{"name":"North Carolina","wikipedia url":"\dots","type":"Location"\},\par
    \noindent \{"name":"Alaska","wikipedia url":"\dots","type":"Location"\},\par
    \noindent \{"name":"Southwest Canada","wikipedia url":"\dots","type":"Location"\},\par
    \noindent \{"name":"Billings, Montana","wikipedia url":"\dots","type":"Location"\},\par
    \noindent \{"name":"New York City","wikipedia url":"\dots","type":"Location"\},\par
    \noindent \{"name":"Philadelphia","wikipedia url":"\dots","type":"Location"\},\par
    \noindent \{"name":"Boston","wikipedia url":"\dots","type":"Location"\},\par
    \noindent \{"name":"Maine","wikipedia url":"\dots","type":"Location"\},\par
    \noindent \{"name":"Wind chill","wikipedia url":"\dots","type":"Other"\},\par
    \noindent \{"name":"Arctic blast","wikipedia url":"\dots","type":"Other"\},\par
    \noindent \{"name":"Unemployment rate","wikipedia url":"\dots","type":"Other"\},\par
    \noindent \dots\par
    \noindent \{"name":"Federal Reserve Bank","wikipedia url":"\dots","type":"Other"\},\par
    \noindent \{"name":"Jobs report","wikipedia url":"\dots","type":"Other"\},\par
    \noindent \{"name":"Bungee jump","wikipedia url":"\dots","type":"Other"\},\par
    \noindent \{"name":"Debris field","wikipedia url":"\dots","type":"Other"\},\par
    \noindent \{"name":"Spy satellite","wikipedia url":"\dots","type":"Other"\},\par
    \noindent \{"name":"Weather balloon","wikipedia url":"\dots","type":"Other"\},\par
    \noindent \{"name":"Surveillance balloon","wikipedia url":"\dots","type":"Other"\},\par
    \noindent \dots\par
    \noindent \{"name":"United Nations","wikipedia":"\dots","type":"Organization"\}\par
\end{tcolorbox}

\clearpage
\twocolumn

\begin{figure*}[t]
    \centering
    \includegraphics[height=0.82\textheight]{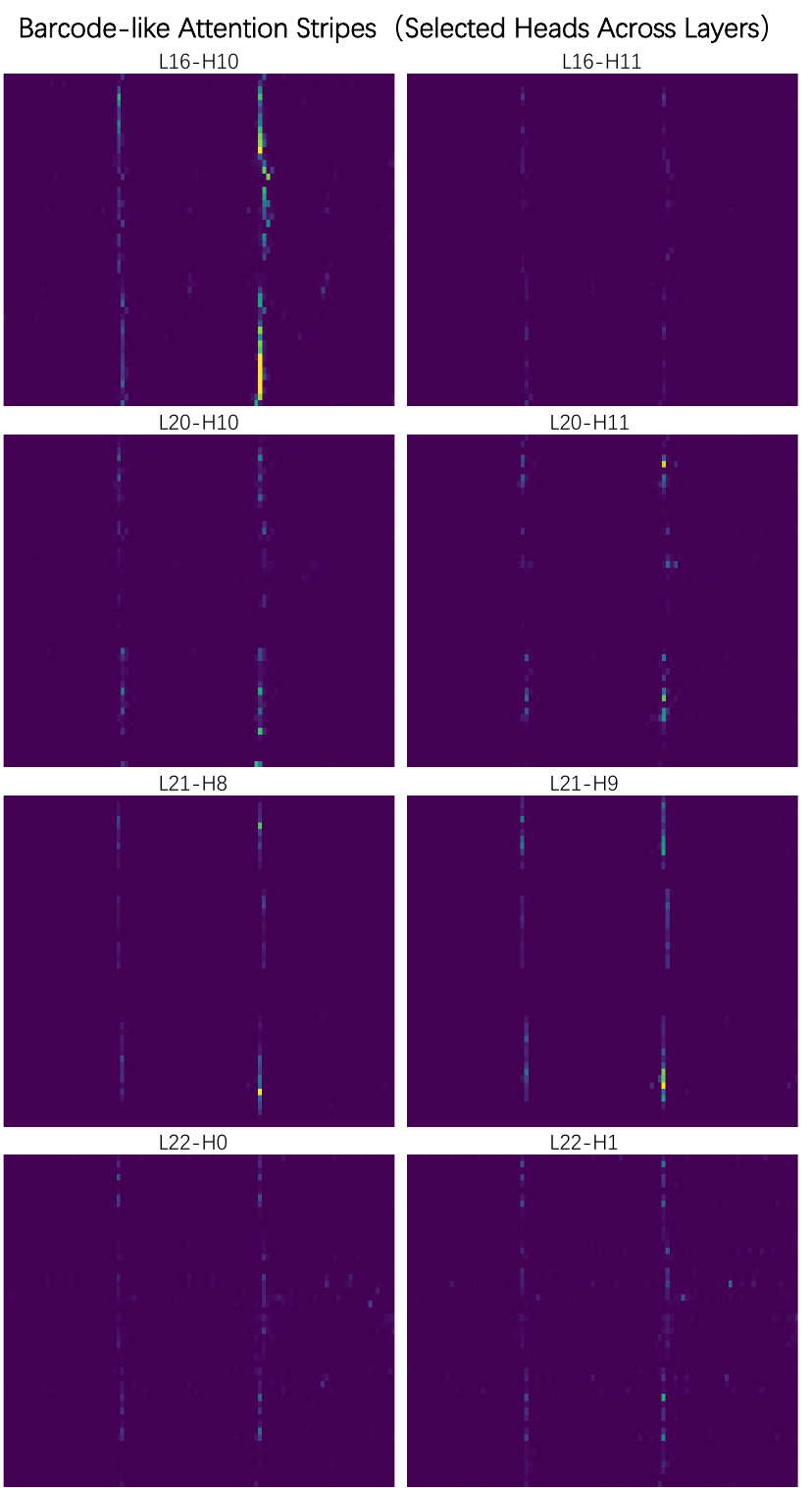}
    \caption{
    \textbf{Barcode-like attention patterns under repetition collapse.}
    Visualization of attention maps from selected heads across multiple layers
    (L16, L20--L22) during persistent repetition loops.
    Each panel exhibits narrow, vertically aligned high-attention stripes,
    indicating head-level locking onto a small repetitive suffix of the history.
    }
    \label{fig:appendix_attention_barcode}
\end{figure*}

\end{document}